\documentclass[sigconf]{aamas}  

\usepackage{booktabs}

\usepackage{flushend}
\setcopyright{ifaamas}  
\acmDOI{doi}  
\acmISBN{}  
\acmConference[AAMAS'20]{Proc.\@ of the 19th International Conference on Autonomous Agents and Multiagent Systems (AAMAS 2020), B.~An, N.~Yorke-Smith, A.~El~Fallah~Seghrouchni, G.~Sukthankar (eds.)}{May 2020}{Auckland, New Zealand}  
\acmYear{2020}  
\copyrightyear{2020}  
\acmPrice{}  
\usepackage{graphicx}  
\usepackage[utf8]{inputenc} 
\usepackage{url}            
\usepackage{booktabs}       
\usepackage{amsfonts}  
\usepackage{amsmath}
\usepackage{amssymb}
\usepackage{dsfont}
\usepackage[
  separate-uncertainty = true,
  multi-part-units = repeat
]{siunitx}
\usepackage{nicefrac}       
\usepackage{microtype}      
\usepackage{multirow}
\usepackage{graphicx}
\usepackage{xcolor}
\usepackage{diagbox}
\usepackage{algorithmicx}
\usepackage{algpseudocode}
\usepackage{subcaption}
\usepackage[english]{babel}
\usepackage{amsthm}
\usepackage{algorithm}
\usepackage{lipsum}

\begin{document}
\title{ExTra: Transfer-guided Exploration }
\titlenote{Equal contribution by A. Santara and R. Madan. R. Madan worked on this project while he was a student at Indian Institute of Technology Kharagpur. A. Santara is currently working at Google Research.}
\titlenote{Published as an Extended Abstract.}

\author{Anirban Santara}
\affiliation{%
 \institution{Indian Institute of Technology Kharagpur}
 \city{Kharagpur} 
 \state{WB, India}
}
\email{nrbnsntr@gmail.com}

\author{Rishabh Madan}
\affiliation{%
 \institution{University of Washington}
 \city{Seattle} 
 \state{Washington}
}
\email{rishabhmadan96@gmail.com}

\author{Pabitra Mitra}
\affiliation{%
 \institution{Indian Institute of Technology Kharagpur}
 \city{Kharagpur} 
 \state{WB, India}
}
\email{pabitra@cse.iitkgp.ernet.in}

\author{Balaraman Ravindran}
\affiliation{%
 \institution{Indian Institute of Technology Madras}
 \city{Chennai} 
 \state{TN, India}
}
\email{ravi@cse.iitm.ac.in}

\begin{abstract}  
In this work we present a novel approach for transfer-guided exploration in reinforcement learning that is inspired by the human tendency to leverage experiences from similar encounters in the past while navigating a new task. Given an optimal policy in a related task-environment, we show that its bisimulation distance from the current task-environment gives a lower bound on the optimal advantage of state-action pairs in the current task-environment. Transfer-guided Exploration (ExTra) samples actions from a Softmax distribution over these lower bounds. In this way, actions with potentially higher optimum advantage are sampled more frequently. In our experiments on gridworld environments, we demonstrate that given access to an optimal policy in a related task-environment, ExTra can outperform popular domain-specific exploration strategies viz. epsilon greedy, Model-Based Interval Estimation -- Exploration Bonus (MBIE-EB), Pursuit and Boltzmann in rate of convergence. We further show that ExTra is robust to choices of source task and shows a graceful degradation of performance as the dissimilarity of the source task increases. We also demonstrate that ExTra, when used alongside traditional exploration algorithms, improves their rate of convergence. Thus it is capable of complementing the efficacy of traditional exploration algorithms.
\end{abstract}
\keywords{Exploration; Bisimulation; Reinforcement Learning; Transfer Learning}  

\maketitle
\section{Introduction}
While attempting to solve a new task human beings tend to take actions motivated by similar situations faced in the past. 
These actions often happen to be good starting points even if the prior experiences are not in the exact same task-environment. A Reinforcement Learning (RL) agent learns by trial and error in an environment using reward signals that are indicative of progress or accomplishment of the target task \citep{sutton2018reinforcement}. The agent uses two policies to act in the environment during the learning phase. The policy that the agent learns to exploit for solving the target task is known as the target policy. The agent also has a behavioral policy that it uses for exploration in order to find better solutions for the target policy. The sample efficiency and convergence time of an RL algorithm heavily depends on the exploration method used by the behavioral policy. A large body of research in reinforcement learning has been dedicated to the formulation of sample-efficient algorithms. Some of the notable developments include count based exploration \cite{strehl2008analysis}, curiosity driven exploration \cite{gregor2014curiosity}, optimism in the face of uncertainty \cite{kearns2002near}, Thompson sampling or posterior sampling and bootstrapped methods \citep{osband2016deep}, parameter space exploration \citep{fortunato2017noisy} and intrinsic motivation \citep{oudeyer2009intrinsic}. However these methods are based on heuristics specific to the current environment and do not use any prior experiences of the agent in other environments.\\ 

The motivation of this work is to formulate an exploration method that uses prior experiences of an agent at similar tasks in other environments for improving the efficiency of exploration in the current task-environment. Prior work includes learning \emph{action priors} from one or more source tasks to identify useful actions or affordances for the target task \cite{abel2015goal}, policy and value function initialization \cite{abel2018policy}, policy reuse \cite{fernandez2006probabilistic}, policy transfer through reward shaping \cite{brys2015policy} and transfer of samples \cite{tirinzoni2019transfer}. Recent work in the Meta Reinforcement Learning community have approached this problem from the perspectives of representation learning  \cite{vezzani2019learning} and introduction of structured stochasticity via a learned latent space \cite{gupta2018meta}. However most of these methods are limited to using source tasks that share the same state and action spaces as the target task. In this paper, we consider the more general setting of cross-domain transfer where the source environment can have different state and action spaces. Cross-domain transfer in RL has been studied by \cite{taylor2007cross} and \cite{joshi2018cross} who propose methods involving transfer of rules and adaptation of the source policy respectively. We take a different approach in our paper that is based on the theory of bisimulation based policy transfer \cite{taylor2009transfer,castro2010using}. Bisimulation metric measures the distance between states in terms of their long term behavior. Bisimulation and its close analogue MDP Homomorphism \cite{ravindran2002model,ravindran2003smdp} have been extensively used for transfer learning in RL \cite{castro2010using,soni2006using,sorg2009transfer,taylor2009bounding} and provide strong theoretical performance guarantees. 
Given a source environment with a known optimal policy for a related task and its lax-bisimulation distance function \citep{taylor2009transfer} from the current task-environment, we derive a lower bound on the optimal advantage of state-action pairs of the current environment. During exploration in the current environment, the agent samples actions from a Softmax distribution over these lower bounds. Thus we use transfer learning to guide the exploration in a way that actions with potentially higher advantage are chosen with higher probability. We call this algorithm \textit{Tra}nsfer-guided \textit{Ex}ploration, abbreviated as ExTra. Our experiments demonstrate that ExTra can achieve significant gains in rate of convergence of Q-learning over exploration methods that only use information associated to the current environment.\\


Our contributions in this paper can be summarized as follows:
\begin{itemize}
    \item We prove that transfer via bisimulation \citep{castro2010using} maximizes a lower bound on the optimal advantage of target actions.
    \item We use this bisimulation distance based lower bound to formulate the Transfer-guided Exploration (ExTra) algorithm as a means of using prior experiences for accelerating reinforcement learning in new environments.
    \item We demonstrate that given the optimal policy from a related task-environment and its lax-bisimulation distance function, ExTra achieves faster convergence compared to exploration methods that only use local information. Additionally, ExTra is robust to source task selection with predictable graceful degradation of performance and can complement traditional exploration methods by improving their rates of convergence.
\end{itemize}

The rest of the paper is organized as follows. We briefly describe the essential theoretical concepts used in this paper in Section \ref{sec:background}. Then we introduce ExTra in Section \ref{sec:proposed_methodology} and present the results of comparison with traditional exploration methods in Section \ref{sec:experimental_results}. We conclude the paper with a summary of the proposed method and scope of future work in Section \ref{sec:conclusion}.
\section{Background}
\label{sec:background}
In this section we present a brief introduction to the essential theoretical concepts used in this paper. A Markov Decision Process (MDP) is a discrete time stochastic control process that is commonly used as a framework for reinforcement learning \cite{sutton2018reinforcement}. An MDP is defined as $\mathcal{M} = \langle S, A, P, R \rangle$ where $S$ is the set of states of the environment, $A$ is the set of actions available to the agent, $P : S \times A \times S \rightarrow [0, 1]$ is the transition function that gives a probability distribution over next states for each state, action pair and $R : S\times A \rightarrow \mathbb{R}$ is the reward function for the task at hand. 
A policy is defined as a function $\pi:S \rightarrow A$ that returns an action for a given state. When an agent executes a policy $\pi$, it generates a trajectory $\tau = \langle s_1, a_1, s_2, a_2, \dots, s_T \rangle$ and the 
accumulated reward obtained in the trajectory is written as $\mathcal{R}(\tau|\pi)$. The goal of RL is to find the optimal policy $\pi^*$ that maximizes the expectation of $\mathcal{R}(\tau|\pi)$.
\begin{equation}
\label{eqn:objective}
    \pi^* = arg\max_{\pi}\mathbb{E}_{\tau\sim\pi}[\mathcal{R}(\tau|\pi)]
\end{equation}

The state action value function $Q^\pi(s, a)$ of a policy $\pi$ is defined as the expected cumulative reward that an agent can get by starting at a state $s$, taking action $a$ and henceforth following the policy $\pi$. 
The optimum state-action value function $Q^*(s,a)$ is defined as $Q^*(s,a) = \max_{\pi}Q^\pi(s,a) \;\; \forall s\in S, a \in A$ and the optimal policy $\pi^*$ can be written in terms of $Q^*(s,a)$ as:
\begin{equation}
    \pi^*(s) = arg\max_{a\in A} Q^*(s,a)
\end{equation}
The optimal value function, $Q^*(s,a)$ can be obtained by value iteration using the Bellman equation \cite{sutton2018reinforcement}:
\begin{equation}
    Q^{t+1}(s,a) = \sum_{s'} P(s, a, s') (R(s, a) + \gamma \max_{b} Q^t(s',b))
\end{equation}
As $t\rightarrow \infty, Q^t(s,a) \rightarrow Q^*(s,a)$. 
We use value iteration to obtain the optimal policies in all our experiments in this paper. In the next two subsections we introduce exploration and transfer methods in RL.

\subsection{Exploration in Reinforcement Learning}
A reinforcement learning agent learns through trial and error in the environment. At any step of decision making, the agent either "exploits" the best policy it has learned or "explores" other actions in search of a better strategy. Balancing exploration and exploitation is a key challenge in RL 
and a large body of literature has been dedicated to the formulation of strategies that address this dilemma. Exploration strategies can be widely classified into two categories: \textit{directed} and \textit{undirected} \cite{thrun1992efficient}. While directed exploration methods utilize exploration specific knowledge collected online, undirected exploration methods are driven almost purely by randomness with the occasional usage of estimates of utility of a state-action pair \cite{thrun1992efficient,tijsma2016comparing}. Popular undirected exploration algorithms include random walk \cite{mozer1990discovering}, $\epsilon$-greedy \cite{whitehead1991learning,sutton2018reinforcement} and softmax or Boltzmann exploration \cite{sutton1990integrated,cesa2017boltzmann}. On the other hand, some notable directed exploration methods are count-based \cite{wiering1998efficient,sato1988learning}, error-based  \cite{schmidhuber1991adaptive,thrun1992active}, and recency based \cite{wiering1998efficient}. In the rest of this section, we briefly describe the exploration algorithms that are used as baselines in the current work. For a detailed review of exploration algorithms in RL, please refer to \cite{tijsma2016comparing}.

\subsubsection{$\mathbf{\epsilon}$\textbf{-greedy}} In $epsilon$-greedy exploration, the agent explores by choosing random actions with probability $\epsilon$ and follows the learned policy greedily the rest of the time.

\subsubsection{\textbf{Model Based Interval Estimation - Exploration Bonus (MBIE-EB)}} MBIE-EB \cite{strehl2005theoretical} is a count-based exploration algorithm that supplies the agent with count-based reward bonuses for favouring exploration of less visited states and actions. The reward bonus is calculated as:
\begin{equation}
    r_{bonus}(s,a) = \frac{\beta}{\sqrt{n(s,a)}}
\end{equation}
where $n(s,a)$ is the number of times the agent chose the state action pair $(s,a)$.

\subsubsection{\textbf{Pursuit}}
Pursuit \cite{sutton2018reinforcement} is an undirected exploration algorithm for Multi-Arm Bandits, adapted for MDP by \cite{tijsma2016comparing}. In Pursuit, the agent follows a stochastic policy $\pi(s,a)$. 
After the update step, $t$, if $a^*_{t+1} = arg\max_a Q_{t+1}(s_t, a)$, Pursuit updates $\pi_{t+1}(s_t, a)$ as follows:

\begin{equation}
    \pi_{t+1}(s_t, a) =
        \begin{cases}
            \pi_t(s_t, a) + \beta [1 - \pi_t(s_t, a)], &\quad \text{if}\; a = a^*_{t+1}\\
            \pi_t(s_t, a) + \beta [0 - \pi_t(s_t, a)], &\quad \text{if}\; a \ne a^*_{t+1}\\
        \end{cases}
\end{equation}
where $\beta$ is a hyperparameter.

\subsubsection{\textbf{Boltzmann exploration}}
Boltzmann or Softmax exploration \cite{sutton2018reinforcement} is another undirected exploration algorithm in which 
the probabilities of the different actions are assigned by a Boltzmann distribution over 
the state-action value function $Q_t(s_t,a)$.
\begin{equation}
    \pi(s_t,a) = \frac{e^{Q_t(s_t,a)/T}}{\sum_{i=1}^m e^{Q_t(s_t, a^i)/T}}
\end{equation}

\subsection{Transfer in Reinforcement Learning}
Transfer learning aims to achieve generalization of skills across tasks by using knowledge gained in one task to accelerate learning of a different task. In reinforcement learning, the transferred knowledge can be high level, such as, rules or advice, sub-task definitions, shaping reward, or low level, such as, task features, experience instances, task models, policies, value functions and distribution priors. Most transfer algorithms for RL make certain assumptions about the relationship between the source and target MDPs. Taylor et al. \cite{taylor2009transfer} gives a comprehensive overview. In this work, we study the general case of transfer of knowledge between MDPs with discrete state and action spaces and make no assumptions about their structures or relationship. We use the bisimulation transfer framework of \cite{castro2010using} as it provides a principled method of transfer for the general setting and a way to estimate the relative goodness of actions under the transferred model \cite{ferns2004metrics,taylor2009bounding}.

\subsection{Transfer using bisimulation metric}
\label{sec:bisim_transfer}
Bisimulation, first introduced for MDP by Givan et al. \cite{givan2003equivalence}, is a relation that draws equivalence between states of an MDP that have the same long-term behavior. Bisimulation is equivalent to the theory of MDP homomorphism \cite{ravindran2002model,ravindran2003smdp} that studies equivalence relations based on reward structure and transition dynamics. For an MDP, $\mathcal{M} = \langle S, A, P, R \rangle$, a relation $E\subset S \times S$ is a bisimulation relation if, for $s_1, s_2 \in S$, whenever $s_1 E s_2$, the following conditions hold:
\begin{eqnarray*}
    \forall a\in A,\quad R(s_1, a) &=& R(s_2, a)\\
    \forall a\in A, \forall C \in S/E,\quad \sum_{s'\in C} P(s_1, a)(s') &=& \sum_{s'\in C} P(s_2, a)(s')
\end{eqnarray*}

\noindent $S/E$ is the set of equivalence classes induced by $E$ in $S$. Ferns et al. \cite{ferns2004metrics} proposed bisimulation metric as a quantitative analogue of the bisimulation relation that can be used as a notion of distance between states of an MDP. Under this metric, two states are bisimilar if their bisimulation distance is zero. The higher the distance, the more dissimilar the states are. In order  to measure the bisimulation distance between state-action pairs of different MDPs, Taylor et al. \cite{taylor2009bounding} introduced the lax bisimulation metric. Considering two MDPs, $\mathcal{M}_1 = \langle S_1, A_1, P_1, R_1 \rangle$ and $\mathcal{M}_2 = \langle S_2, A_2, P_2, R_2 \rangle$ the lax bisimulation metric is defined as follows.\\

\begin{definition}
\textbf{Lax bisimulation metric} \cite{taylor2009bounding,castro2010using}: Let $M$ be the set of all semi-metrics on $S_1 \times A_1 \times S_2 \times A_2$. $\forall s_1\in S_1, a_1\in A_1, s_2\in S_2, a_2\in A_2, d\in M$, $F:M\rightarrow M$ is defined as:
\begin{equation*}
\begin{split}
    F(d)(s_1, a_1, s_2, a_2) = c_R |R_1(s_1, a_1)-R_2(s_2, a_2)| +\\
    \qquad c_T T_K(d')(P_1(s_1, a_1), P_2(s_2, a_2))
\end{split}
\end{equation*}

\noindent Where $d'(s_1, s_2) = \max(\max_{a_1\in A_1} \min_{a_2\in A_2} d((s_1, a_1), (s_2, a_2)), \\
 \min_{a_1\in A_1} \max_{a_2\in A_2} d((s_1, a_1), (s_2, a_2)))$ and $T_K(d')(P_1(s_1, a_1),\\ P_2(s_2, a_2))$ is the Kantorovich distance \cite{gibbs2002choosing} between the transition probability distributions. $c_R, c_T \in \mathbb{R}$ are  tunable hyperparameters representing relative weightages for the reward and Kantorovich components. $F$ has a least-fixed point $d_L$ and $d_L$ is called the Lax Bisimulation Metric between $\mathcal{M}_1$ and $\mathcal{M}_2$.\\
\end{definition}

Algorithm \ref{algo:bisim_transfer2} gives the pseudo-code of the bisimulation based policy transfer algorithm of Castro et al. \cite{castro2010using}. As only optimal actions of the source domain are the ones that get transferred, Castro et al. \cite{castro2010using} define a simplified notation for the lax bisimulation distance as follows:

\begin{equation}
    d_{\approx}(s_1, (s_2, a_2)) = d_L((s_1, \pi_1^*(s_1)), (s_2, a_2))
\end{equation}

\noindent In step $6$, $LB(s_1, s_2)$ refers to the lower bound on the optimal $Q$-value of lax-bisimulation transferred actions defined in Corollary $11$ of Castro et al. \cite{castro2010using}. In step $8$, the matching state in the source environment, $s_{match}$, is assigned such that it maximizes this lower bound. As Castro et al. note in \cite{castro2010using}, this is a superior choice compared to the state that minimizes bisimulation distance alone. The transferred action, $a_2^T$, is the one that minimizes the bisimulation distance from the matching source state, $s_{match}$, and its optimal action, $\pi_1^*(s_{match})$.\\

\noindent Depending upon how $d_\approx(s_1, (s_2, a_2))$ and $d_\approx'(s_1, s_2)$ are related, Castro et al. \cite{castro2010using} define two variants of the lax bisimulation metric -- optimistic and pessimistic.
\begin{equation}
    d_\approx'(s_1, s_2) = \begin{cases}
    \max_{a_2\in A_2} d_\approx(s_1, (s_2, a_2)) \quad \text{if pessimistic}\\
    \min_{a_2\in A_2} d_\approx(s_1, (s_2, a_2)) \quad \text{if optimistic}
    \end{cases}
\end{equation}

\begin{algorithm}[H]
    \centering
    \caption{Bisimulation based policy transfer \cite{castro2010using}}\label{algorithm}
    \begin{algorithmic}[1]
    \For{All $s_1 \in S_1$, $s_2 \in S_2$, $a_2 \in A_2$}
        \State Compute $d_\approx (s_1, (s_2, a_2))$
    \EndFor
    \For{All $s_2 \in S_2$}
        \For{All $s_1 \in S_1$}
            \State $LB(s_1, s_2) \leftarrow V_1^*(s_1) - d_\approx'(s_1, s_2)$
        \EndFor
        \State $s_{match} \leftarrow arg\max_{s_1\in S_1} LB(s_1, s_2)$
        \State $a_2^T \leftarrow arg\min_{a_2\in A_2} d_\approx(s_{match}, (s_2, a_2))$
    \EndFor
    \end{algorithmic}
    \label{algo:bisim_transfer2}
\end{algorithm}

\begin{algorithm}[H]
    \centering
    \caption{$\epsilon$-greedy Q-learning with Transfer-guided Exploration (ExTra)}\label{algo:T-ReX-Q}
    \begin{algorithmic}[1]
    \State Given: $d_\approx (s_1, (s_2, a_2)), \quad \forall s_1 \in S_1, s_2 \in S_2, a_2 \in A_2$ 
    \For{All $s_2 \in S_2$}
        \For{All $s_1 \in S_1$}
            \State $LB(s_1, s_2) \leftarrow V_1^*(s_1) - d_\approx'(s_1, s_2)$
        \EndFor
        \State $s_{match} \leftarrow arg\max_{s_1\in S_1} LB(s_1, s_2)$
    \EndFor
    \State step = $0$
    \While{step $<$ MAXSTEPS}
        \State \textbf{with probability} $\epsilon$
        \State \quad $a_2 \sim \pi_{ExTra}(\cdot | s_2, \mathcal{M}_1, \pi_1^*)$
        \State \textbf{with probability} $1-\epsilon$
        \State \quad $a_2 \leftarrow arg\max_{b} Q_2(s_2, b)$
        \State $r$ = $take\_step$($a_2$)
        \State $update\_Q(Q_2(s_2, a_2), r)$
        \State step = step + $1$
    \EndWhile
    \end{algorithmic}
    \label{algo:ExTra-eps-greedy-Q}
\end{algorithm}


\section{Transfer Guided Exploration}
\label{sec:proposed_methodology}
\begin{figure*}[t]
    \centering
    \includegraphics[width=0.9\textwidth]{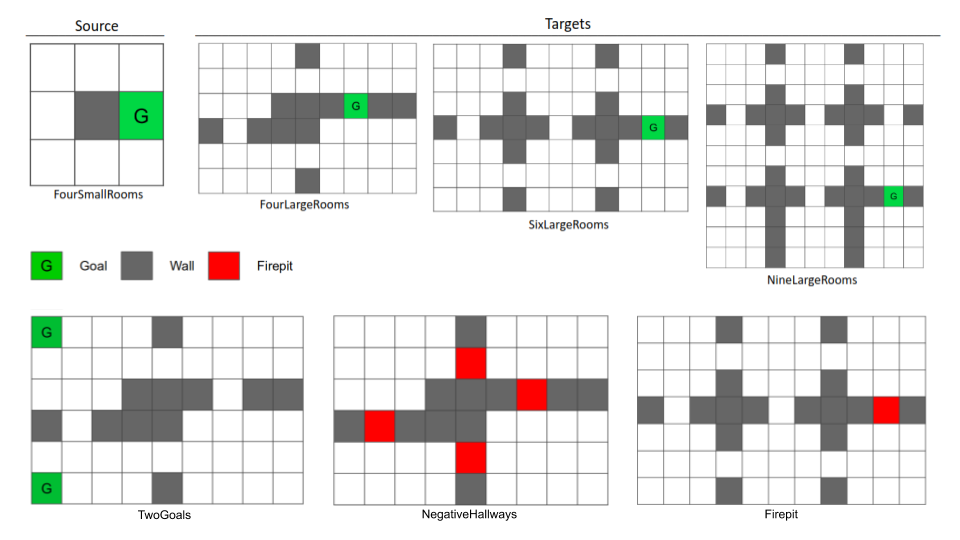}
    \caption{Gridworld environments used in our experiments}
    \label{fig:envs}
\end{figure*}

In this section we present Transfer-guided Exploration (ExTra) as a novel directed exploration method for reinforcement learning that is based on the bisimulation transfer framework of Castro et al. \cite{castro2010using}. The motivation is to use transferred knowledge from the optimal policy in a source domain to accelerate RL in a target domain -- especially during the initial stage, when the domain-specific statistics used by traditional directed exploration methods are yet to be consistently estimated. We first present some results which relate bisimulation distance to the optimal advantage of an action in a target state.\\

\begin{lemma}
\label{lemma:lemma1} $\forall s_1\in S_1$, $\forall s_2\in S_2$, $\forall a_2\in A_2$, $|V_1^*(s_1) - Q_2^*(s_2, a_2)| \le d_\approx(s_1, (s_2, a_2))$.
\end{lemma}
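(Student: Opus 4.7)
The plan is to prove the stronger statement $|Q_1^*(s_1,a_1) - Q_2^*(s_2,a_2)| \le d_L((s_1,a_1),(s_2,a_2))$ for \emph{all} state--action pairs and then specialize to $a_1 = \pi_1^*(s_1)$, which converts $Q_1^*(s_1,\pi_1^*(s_1))$ into $V_1^*(s_1)$ and $d_L$ into $d_\approx$ by the simplified notation. Defining
\begin{equation*}
g\bigl((s_1,a_1),(s_2,a_2)\bigr) := |Q_1^*(s_1,a_1) - Q_2^*(s_2,a_2)|,
\end{equation*}
I would argue that $g$ is a post-fixed point of the lax-bisimulation operator $F$, i.e.\ $g \le F(g)$. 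Since $F$ is monotone and (for $c_T$ playing the role of the discount factor) contractive with unique fixed point $d_L$, iterating $F$ from $g$ produces a monotone sequence converging to $d_L$, hence $g \le d_L$. An entirely equivalent route is an induction on value iteration: set $Q_i^{(0)} \equiv 0$, observe that $|Q_1^{(0)} - Q_2^{(0)}| \le d_L$ trivially, and propagate the bound through Bellman backups until passing to the limit.

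For the post-fixed-point inequality I would take $c_R = 1$ and $c_T = \gamma$ without loss of generality, apply the Bellman optimality equations to $Q_1^*$ and $Q_2^*$, and split the difference via the triangle inequality into a reward term $|R_1(s_1,a_1)-R_2(s_2,a_2)|$ and a transition term
\begin{equation*}
\gamma \Bigl|\sum_{s'_1} P_1(s_1,a_1)(s'_1) V_1^*(s'_1) - \sum_{s'_2} P_2(s_2,a_2)(s'_2) V_2^*(s'_2)\Bigr|.
\end{equation*}
By Kantorovich duality, this second term is bounded by $\gamma\, T_K(g')\bigl(P_1(s_1,a_1), P_2(s_2,a_2)\bigr)$ provided one can extend $V_1^*$ and $V_2^*$ to a joint function on $S_1 \sqcup S_2$ that is $1$-Lipschitz with respect to $g'$. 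Combined with the reward bound, this reproduces exactly $F(g)(s_1,a_1,s_2,a_2)$.

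The main obstacle is the cross-domain Lipschitz property $|V_1^*(s_1) - V_2^*(s_2)| \le g'(s_1,s_2)$, because the definition of $d'$ uses the asymmetric combination $\max(\max_{a_1}\min_{a_2}\,\cdot\,,\ \min_{a_1}\max_{a_2}\,\cdot\,)$ rather than a plain Hausdorff distance. Writing $V_i^*(s) = \max_a Q_i^*(s,a)$ and using the elementary estimate
\begin{equation*}
\bigl|\max_x f(x) - \max_y h(y)\bigr| \le \max\Bigl(\max_x \min_y |f(x)-h(y)|,\ \max_y \min_x |f(x)-h(y)|\Bigr),
\end{equation*}
the first summand becomes exactly $\max_{a_1}\min_{a_2} g$, and the second summand $\max_{a_2}\min_{a_1} g$ is upper-bounded by $\min_{a_1}\max_{a_2} g$ via the minimax inequality, together reproducing $d'$. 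The intra-domain Lipschitz condition on $S_1\times S_1$ and $S_2\times S_2$ needed to legitimately apply Kantorovich follows from the same identity specialized to a single MDP. With $g \le F(g)$ established, the least-fixed-point argument yields $g \le d_L$, and the specialization $a_1 = \pi_1^*(s_1)$ closes the lemma.
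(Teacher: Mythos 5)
Your proof is correct and takes essentially the same route as the paper's: the paper's own proof just writes $|Q_1^*(s_1,\pi_1^*(s_1)) - Q_2^*(s_2,a_2)| \le |R_1(s_1,\pi_1^*(s_1)) - R_2(s_2,a_2)| + \gamma\, T_K(\cdot)(P_1,P_2) = d_\approx(s_1,(s_2,a_2))$ and defers the justification to ``a similar argument as for Lemma 4'' of \cite{castro2010using}, which is precisely the post-fixed-point/Kantorovich argument you reconstruct, down to the specialization $a_1 = \pi_1^*(s_1)$ at the end. The normalization $c_R = 1$, $c_T = \gamma$ is the same implicit assumption the paper makes (it is needed for the final expression to coincide with $d_\approx$), and your treatment of the asymmetric $\max(\max\min,\ \min\max)$ combination in $d'$ via the min--max inequality is exactly the detail that the paper's citation hides.
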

\begin{proof}
\begin{align*}
    |V_1^*(s_1) - Q_2^*(s_2, a_2)| &= |Q_1^*(s_1, \pi_1^*(s_1)) - Q_2^*(s_2, a_2)|\\ \notag
    &\le |R_1(s_1, \pi_1^*(s_1)) - R_2(s_2, a_2)| \\
    & \quad + \gamma T_K(d_\approx)(P_1(s_1, \pi_1^*(s_1)), P_2(s_2, a_2))\\ \notag
    & \text{by similar argument as for Lemma 4 in } 
     \text{\cite{castro2010using}}\\ \notag
    &= d_\approx(s_1, (s_2, a_2))
\end{align*}
\end{proof}

\sloppy
\begin{corollary}
\label{corollary:corollary1} $\forall s_1\in S_1, \forall s_2\in S_2, |V_1^*(s_1) - V_2^*(s_2)| \le d_\approx(s_1, (s_2, \pi_2^*(s_2))).\\
$
\end{corollary}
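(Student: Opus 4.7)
The plan is to derive the corollary as an immediate specialization of Lemma~\ref{lemma:lemma1}. The key observation is that the optimal value function and optimal state-action value function are related by $V_2^*(s_2) = \max_{a \in A_2} Q_2^*(s_2, a) = Q_2^*(s_2, \pi_2^*(s_2))$, since $\pi_2^*$ is greedy with respect to $Q_2^*$ by definition.

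First, I would rewrite $|V_1^*(s_1) - V_2^*(s_2)|$ by substituting $V_2^*(s_2) = Q_2^*(s_2, \pi_2^*(s_2))$, obtaining $|V_1^*(s_1) - Q_2^*(s_2, \pi_2^*(s_2))|$. Next, I would invoke Lemma~\ref{lemma:lemma1} with the specific choice $a_2 = \pi_2^*(s_2) \in A_2$, which yields
\[
|V_1^*(s_1) - Q_2^*(s_2, \pi_2^*(s_2))| \le d_\approx(s_1, (s_2, \pi_2^*(s_2))).
\]
Chaining these two steps gives the claim.

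There is no real obstacle here, since the bound is simply the Lemma~\ref{lemma:lemma1} inequality evaluated at the optimal action of the target MDP. The only small thing worth flagging is that the inequality in the corollary is not tight: one could in fact replace the right-hand side by $\min_{a_2 \in A_2} d_\approx(s_1, (s_2, a_2))$, because Lemma~\ref{lemma:lemma1} holds for every $a_2$ while $V_2^*(s_2) \ge Q_2^*(s_2, a_2)$ for all $a_2$. However, the stated form using $\pi_2^*(s_2)$ suffices for the subsequent development of ExTra and follows in two lines as described.
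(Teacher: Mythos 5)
Your proof is correct and is exactly the argument the paper intends: the corollary is the specialization of Lemma~\ref{lemma:lemma1} to $a_2 = \pi_2^*(s_2)$, using $V_2^*(s_2) = Q_2^*(s_2,\pi_2^*(s_2))$. One caveat on your side remark: the claimed strengthening to $\min_{a_2\in A_2} d_\approx(s_1,(s_2,a_2))$ does not follow. Lemma~\ref{lemma:lemma1} bounds $|V_1^*(s_1) - Q_2^*(s_2,a_2)|$, not $|V_1^*(s_1) - V_2^*(s_2)|$, and when $V_2^*(s_2) > V_1^*(s_1)$ the inequality $V_2^*(s_2) - V_1^*(s_1) \le V_1^*(s_1) - Q_2^*(s_2,a_2)$ you would need fails; e.g.\ if $d_\approx(s_1,(s_2,b)) = 0$ for some suboptimal action $b$, the minimum is zero while $|V_1^*(s_1) - V_2^*(s_2)|$ can be as large as the optimality gap of $b$. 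So the bound must be taken at $a_2 = \pi_2^*(s_2)$ as stated.
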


\noindent The goal of reinforcement learning is to get the agent to take optimal actions and the goal of efficient exploration is to achieve that fast. Hence, efficient exploration algorithms should be able to weigh the actions available in a given state on the basis of their potential optimality. In an MDP, $\mathcal{M} = \langle S, A, P, R \rangle$, the advantage of an action $a\in A$ in a state $s\in S$ with respect to a policy $\pi$ is defined as $A^\pi(s,a) = Q^\pi(s,a)-V^\pi(s)$. It quantifies the relative superiority of action $a$ compared to the expectation under the policy over next actions. The optimal policy maximizes the optimal $Q$ function: $\pi^*(s)=\text{arg}\max_a Q^*(s,a)$. Substituting the expression of $Q$ in terms of advantage, we have: $\pi^*(s) = \text{arg}\max_a A^*(s,a)+V^*(s) = \text{arg}\max_a A^*(s,a)$. Hence the optimal policy also maximizes the optimal advantage function. Given a source MDP, we derive a lower bound on the optimal advantage of available actions in a target state in terms of the lax-bisimulation distance function and use it as a measure of potential optimality.
\begin{theorem} 
\label{theorem:theorem1}
Given MDPs, $\mathcal{M}_1 = \langle S_1, A_1, P_1, R_1 \rangle$ and $\mathcal{M}_2 = \langle S_2, A_2, P_2, R_2 \rangle$ and bisimulation metric $d_\approx:S_1\times S_2\times A_2\rightarrow \mathbb{R}$ we have $\forall s_2\in S_2, \forall a_2\in A_2$
\begin{equation*}
    A_2^*(s_2, a_2) \ge -d_\approx(s_{match}, (s_2, a_2))-\beta(s_2)
\end{equation*}
Where $A_2^*(s_2, a_2)$ is the optimum advantage function in $\mathcal{M}_2$, $s_{match} = arg\max_{s_1\in S_1} V_1^*(s_1) - d_\approx'(s_1, s_2)$ and $\beta(s_2)=d_\approx(s_{match}, (s_2,\pi_2^*(s_2)))$.
\end{theorem}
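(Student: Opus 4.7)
The plan is to expand the optimal advantage as $A_2^*(s_2,a_2)=Q_2^*(s_2,a_2)-V_2^*(s_2)$ and bound each of the two terms separately in terms of the source value $V_1^*(s_1)$ for an arbitrary $s_1\in S_1$. The source value will cancel upon subtraction, leaving only two lax-bisimulation distances.

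First I would apply Lemma 1 to the term $Q_2^*(s_2,a_2)$: for any $s_1\in S_1$, $V_1^*(s_1)-Q_2^*(s_2,a_2)\le d_\approx(s_1,(s_2,a_2))$, which rearranges to a lower bound $Q_2^*(s_2,a_2)\ge V_1^*(s_1)-d_\approx(s_1,(s_2,a_2))$. Next I would apply Corollary 1 to $V_2^*(s_2)$, which yields $V_2^*(s_2)-V_1^*(s_1)\le d_\approx(s_1,(s_2,\pi_2^*(s_2)))$, i.e.\ the upper bound $V_2^*(s_2)\le V_1^*(s_1)+d_\approx(s_1,(s_2,\pi_2^*(s_2)))$. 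Subtracting the upper bound from the lower bound cancels $V_1^*(s_1)$ and gives
\begin{equation*}
A_2^*(s_2,a_2)\ \ge\ -d_\approx(s_1,(s_2,a_2))-d_\approx(s_1,(s_2,\pi_2^*(s_2)))
\end{equation*}
for every $s_1\in S_1$.

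Finally I would specialize to $s_1=s_{match}=\arg\max_{s_1\in S_1}V_1^*(s_1)-d_\approx'(s_1,s_2)$, which produces exactly the stated bound with $\beta(s_2)=d_\approx(s_{match},(s_2,\pi_2^*(s_2)))$. Note that the inequality in fact holds with any $s_1$, so the role of $s_{match}$ here is merely to select a concrete choice consistent with the surrounding algorithm; the tightest bound over the source state set is actually obtained by minimizing the right-hand side over $s_1$, but the theorem only claims the weaker statement with the algorithm's choice, so no optimization is required.

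There is no real obstacle: the argument is a two-line chain of Lemma 1 and Corollary 1 plus a triangle-style subtraction. The only subtlety worth flagging in the write-up is to be careful with the direction of each absolute-value inequality (using the lower-side for $Q_2^*$ and the upper-side for $V_2^*$) so that the $V_1^*(s_1)$ terms cancel with matching signs rather than compounding.
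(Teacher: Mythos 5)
Your proposal is correct and is essentially the paper's own argument: both proofs telescope $Q_2^*(s_2,a_2)-V_2^*(s_2)$ through $V_1^*(s_{match})$ and bound the two resulting differences with Lemma~\ref{lemma:lemma1} and Corollary~\ref{corollary:corollary1}; you simply use the one-sided forms of those absolute-value bounds directly rather than invoking the triangle inequality on $|V_2^*(s_2)-Q_2^*(s_2,a_2)|$ as the paper does. Your remark that the bound holds for arbitrary $s_1$ and that $s_{match}$ is only a particular choice also matches the paper's discussion following the theorem.
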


\begin{proof}
\sloppy
Since, $V_2^*(s_2) = arg \max_{a_2\in A_2} Q_2^*(s_2, a_2)$,  $V_2^*(s_2) - Q_2^*(s_2, a_2) \ge 0$, and we have:
\begin{align*} 
V_2^*(s_2) - Q_2^*(s_2, a_2) &= |V_2^*(s_2) - Q_2^*(s_2, a_2)| \\ \notag
        &=  |(V_2^*(s_2) - V_1^*(s_{match})) +\\
        &\qquad (V_1^*(s_{match}) - Q_2^*(s_2, a_2))|\\ \notag
        &\le |V_2^*(s_2) - V_1^*(s_{match})| +\\
        &\qquad |V_1^*(s_{match}) - Q_2^*(s_2, a_2)|\\ \notag
        &\le d_\approx(s_{match}, (s_2,\pi_2^*(s_2))) + d_\approx(s_{match}, (s_2, a_2))\\
        & \text{(from Lemma \ref{lemma:lemma1} and Corollary \ref{corollary:corollary1})}\\ \notag
        &= \beta(s_2) + d_\approx(s_{match}, (s_2, a_2)) \notag
\end{align*}
$\therefore\quad A_2^*(s_2, a_2) = Q_2^*(s_2, a_2) - V_2^*(s_2) \ge -d_\approx(s_{match}, (s_2, a_2)) - \beta(s_2)$
\end{proof}



Theorem \ref{theorem:theorem1} gives a lower bound on the optimal advantage of an action in a target state in terms of the bisimulation distance to a source MDP. As explained in Section \ref{sec:bisim_transfer}, Castro et al. \cite{castro2010using} assign matching source state $s_{match}$ to target state-action pair $(s_2, a_2)$ following a minimum performance criterion in the form of a lower bound on $Q_2^*(s_2, a_2)$. We choose to use $s_{match}$ in Theorem \ref{theorem:theorem1} to make sure that this minimum performance criterion is satisfied. While other bounds are possible for choices of $s_1 \in S_1, s_1 \neq s_{match}$, they might not be usable, since they do not guarantee the performance bounds identified by Castro et al. \cite{castro2010using}.\\


\begin{corollary}
\label{corollary:corollary2}
The bisimulation transfer algorithm of \cite{castro2010using} maximizes a lower bound on the optimum advantage function $A_2^*(s_2, a_2)$ of the target environment.
\end{corollary}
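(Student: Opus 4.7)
The plan is to show that the action selection rule of Castro et al.'s Algorithm \ref{algo:bisim_transfer2} is exactly the rule that maximizes the lower bound on $A_2^*(s_2,a_2)$ established in Theorem \ref{theorem:theorem1}. The key observation is that the bound
\[
A_2^*(s_2, a_2) \ge -d_\approx(s_{match}, (s_2, a_2)) - \beta(s_2)
\]
splits cleanly into an action-dependent term, $-d_\approx(s_{match},(s_2,a_2))$, and an action-independent term, $-\beta(s_2)$, once $s_{match}$ has been fixed.

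First, I would note that the choice of $s_{match}$ in Theorem \ref{theorem:theorem1}, namely $s_{match} = \arg\max_{s_1\in S_1}\bigl(V_1^*(s_1) - d_\approx'(s_1, s_2)\bigr)$, coincides exactly with the matching source state produced in steps 6--8 of Algorithm \ref{algo:bisim_transfer2}, where $LB(s_1,s_2) = V_1^*(s_1) - d_\approx'(s_1,s_2)$ is maximized over $s_1$. Hence the algorithm and the theorem agree on $s_{match}$, and consequently on the value of $\beta(s_2) = d_\approx(s_{match}, (s_2, \pi_2^*(s_2)))$.

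Next I would fix an arbitrary $s_2 \in S_2$. Since $\beta(s_2)$ does not depend on $a_2$, maximizing the right-hand side of the bound in Theorem \ref{theorem:theorem1} over $a_2 \in A_2$ reduces to
\[
\arg\max_{a_2 \in A_2}\bigl(-d_\approx(s_{match},(s_2,a_2)) - \beta(s_2)\bigr) = \arg\min_{a_2 \in A_2} d_\approx(s_{match},(s_2,a_2)).
\]
But the right-hand side is precisely the transferred action $a_2^T$ returned in step 9 of Algorithm \ref{algo:bisim_transfer2}. Therefore, the action selected by the bisimulation transfer algorithm is exactly the action that maximizes the Theorem \ref{theorem:theorem1} lower bound on $A_2^*(s_2, a_2)$ for each target state $s_2$.

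There is no substantive obstacle here; the statement is essentially a re-reading of Theorem \ref{theorem:theorem1} in light of the definition of Algorithm \ref{algo:bisim_transfer2}. The only subtlety worth remarking on is the step that drops $\beta(s_2)$ from the $\arg\max$: this is legitimate only because $s_{match}$ is determined by $s_2$ alone, not by the candidate action $a_2$, so $\beta(s_2)$ is truly a constant in the optimization. Once that is pointed out explicitly, the conclusion follows immediately and the corollary is proved.
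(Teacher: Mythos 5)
Your proposal is correct and follows essentially the same route as the paper's own proof: both observe that the transferred action $a_2^T = \arg\min_{a_2\in A_2} d_\approx(s_{match},(s_2,a_2))$ coincides with $\arg\max_{a_2\in A_2}\bigl(-d_\approx(s_{match},(s_2,a_2))-\beta(s_2)\bigr)$ because $\beta(s_2)$ is independent of $a_2$, so the algorithm maximizes the Theorem~\ref{theorem:theorem1} lower bound. Your explicit check that the algorithm's $s_{match}$ agrees with the one in the theorem is a welcome (if minor) addition, but the core argument is identical.
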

\begin{proof}

In bisimulation transfer, the transferred action $a_2^T$ for target state $s_2$ is given by:
\begin{align*}
    a_2^T &= arg\min_{a_2\in A_2} d_\approx(s_{match}, (s_2, a_2)) \\
          &= arg\max_{a_2\in A_2} -d_\approx(s_{match}, (s_2, a_2))-\beta(s_2)
\end{align*}
Since, $\beta(s_2)$ is independent of $a_2$.
\end{proof}
Note that Theorem \ref{theorem:theorem1} and Corollaries \ref{corollary:corollary1} and \ref{corollary:corollary2} hold for both optimistic and pessimistic definitions of the lax bisimulation metric \cite{castro2010using}.\\

\begin{definition}
\textbf{Bisimulation Advantage}: Given MDPs, $\mathcal{M}_1 = \langle S_1, A_1, P_1, R_1 \rangle$ and $\mathcal{M}_2 = \langle S_2, A_2, P_2, R_2 \rangle$ and bisimulation metric $d_\approx:S_1\times S_2\times A_2\rightarrow \mathbb{R}$ we define the bisimulation advantage of an action $a_2\in A_2$ in a state $s_2\in S_2$ as: 

\begin{equation}
    A_\approx(s_2, a_2)=-d_\approx(s_{match}, (s_2, a_2))-\beta(s_2)
\end{equation}

\end{definition}

We look to define a probability distribution over target actions that assigns higher probability to actions having higher bisimulation advantages. As, in general, the solution to $arg\min_{a_2} d_\approx(s_{match}, (s_2, a_2))$ is non-unique, under bisimulation, multiple actions in the target environment could potentially map to the optimal action in the source environment. Among all policies that pick at least one of the actions that map to the optimal action under the bisimulation, by the principle of maximum entropy \cite{thomas1991elements} , we choose one that assigns equal probability to all those actions. 
Softmax distribution satisfies this property. Hence we define the behavioral policy in Transfer-guided Exploration (ExTra) as a Softmax distribution over bisimulation advantages.
\begin{equation}
    \pi_{\text{ExTra}}(a_2|s_2, \mathcal{M}_1, \pi_1^*) = \frac{e^{A_\approx(s_2, a_2)}}{\sum_{b\in A_2} e^{A_\approx(s_2, b)}}
\end{equation}


In transfer-guided exploration (ExTra), the agent samples actions from $\pi_{\text{ExTra}}(\cdot | s_2, \mathcal{M}_1, \pi_1^*)$. Since the optimal policy in the target MDP, $\pi_2^*$, is not known during learning, $\beta(s_2)$ can not be known exactly. Since $\beta(s_2) (\ge 0)$ is the same for all actions in a given state $s_2$, replacing $\beta(s_2)$ with a real positive number preserves the order of probability values assigned to different actions by $\pi_{\text{ExTra}}$. If transfer is successful, $\pi_{\text{ExTra}}$ would assign higher probabilities to the optimal actions and thus help the agent to quickly arrive at the optimal policy. However, in the event of an unsuccessful transfer, $\pi_{\text{ExTra}}$ may be biased away from the optimal actions. 
This may cause the agent to remain stuck with the wrong actions for long periods of time. 
In order to help the agent recover from the effect of unsuccessful transfer, we set $\beta = \alpha n$, where $\alpha\in \mathbb{R}^+$ is a tunable hyperparameter and $n$ is the current step number. As $n$ grows, $\pi_{\text{ExTra}}(\cdot | s_2, \mathcal{M}_1, \pi_1^*)$ tends to a uniform distribution over actions, thus annealing the influence of transferred knowledge on exploration with time. This does not hurt the agent's learning in states where the transfer was successful because the agent happens to have explored the optimal actions early on in training in those states. Note that changing $\beta$ does not affect the rate of exploration; instead, it merely changes the shape of the probability distribution from which the agent samples actions during exploration. Algorithm \ref{algo:T-ReX-Q} gives an example use case of ExTra as an alternative to random exploration in $\epsilon$-greedy Q-learning. Please refer to the Appendix section for more ways of using ExTra in conjunction with other traditional exploration methods.

\section{Experimental Results}
\label{sec:experimental_results}

\begin{table*}[t]
\caption{Percentage AuC-MAR of $\epsilon$-greedy Q-learning with ExTra versus traditional exploration methods.}
\label{table:MAR}
\centering
\begin{tabular}{|l|l|l|l|l|l|l|}
\hline
\multicolumn{2}{|l|}{\multirow{2}{*}{Target Environment}} & \multicolumn{5}{c|}{AuC-MAR($\%$)} \\ \cline{3-7} 
\multicolumn{2}{|l|}{} & \multicolumn{1}{c|}{$\epsilon$-greedy} & \multicolumn{1}{c|}{MBIE-EB} & \multicolumn{1}{c|}{Pursuit} & \multicolumn{1}{c|}{Softmax} & \multicolumn{1}{c|}{ExTra} \\ \hline
\multicolumn{2}{|l|}{FourLargeRooms} & $61.08 \pm 3.37$ & $63.91 \pm 1.62$ & $73.96 \pm 2.59$ & $77.64 \pm 1.70$ & $\mathbf{82.79} \pm 1.68$ \\ \hline
\multicolumn{2}{|l|}{SixLargeRooms} & $45.81 \pm 3.07$ & $54.36 \pm 2.69$ & $63.44 \pm 1.44$ & $61.46 \pm 1.31$ & $\mathbf{72.52} \pm 1.28$ \\ \hline
\multicolumn{2}{|l|}{NineLargeRooms} & $43.20 \pm 2.45$ & $45.87 \pm 5.00$ & $61.52 \pm 2.08$ & $53.66 \pm 2.01$ & $\mathbf{64.81} \pm 1.11$ \\ \hline
\end{tabular}
\end{table*}

\begin{figure*}[h!]
    \centering
        \begin{subfigure}{0.3\textwidth}    \includegraphics[width=\linewidth]{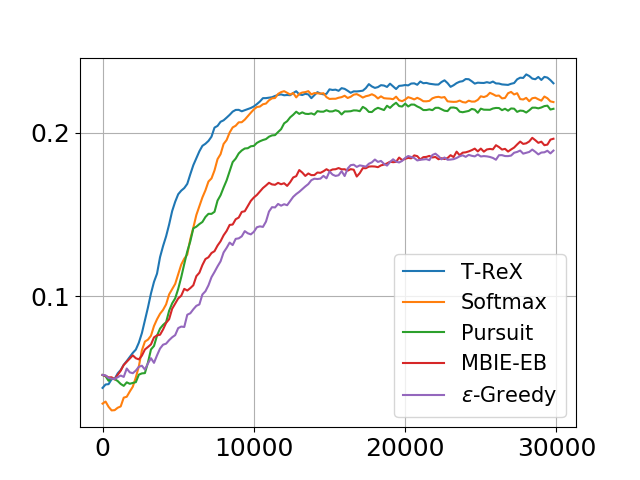}
            \caption{\texttt{FourLargeRooms}}
            \label{fig:1a}
        \end{subfigure}
        \hspace*{\fill}
        \begin{subfigure}{0.3\textwidth}    \includegraphics[width=\linewidth]{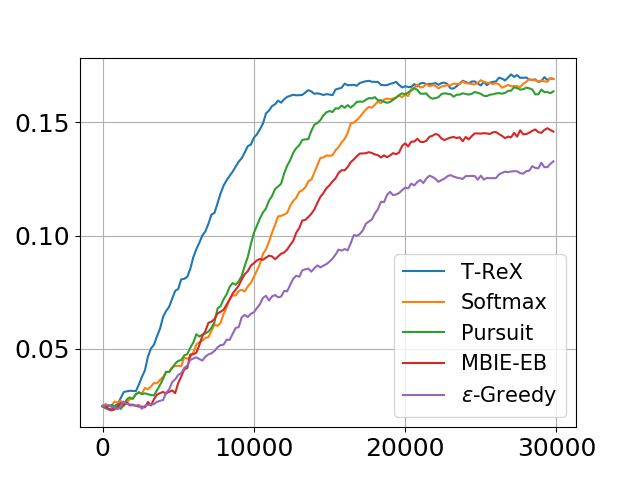}
            \caption{\texttt{SixLargeRooms}}
            \label{fig:1b}
        \end{subfigure}
        \hspace*{\fill}
        \begin{subfigure}{0.3\textwidth}    \includegraphics[width=\linewidth]{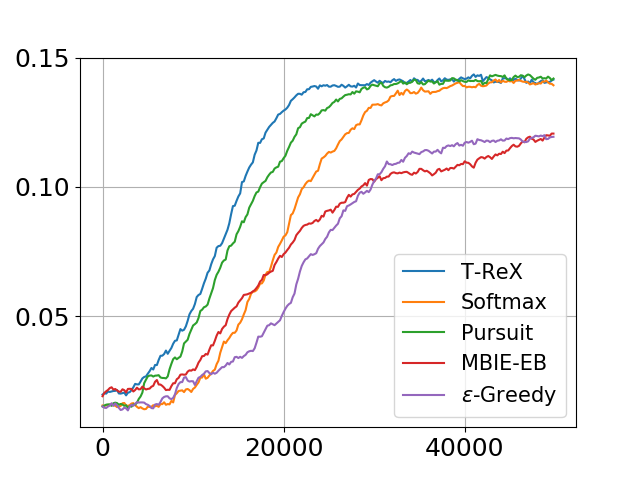}
            \caption{\texttt{NineLargeRooms}}
            \label{fig:1c}
        \end{subfigure}
    \caption{Variation of MAR with steps of training}
    \label{fig:MAR}
\end{figure*}

\begin{figure*}[h!]
    \centering
        \begin{subfigure}{0.22\textwidth}    \includegraphics[width=\linewidth]{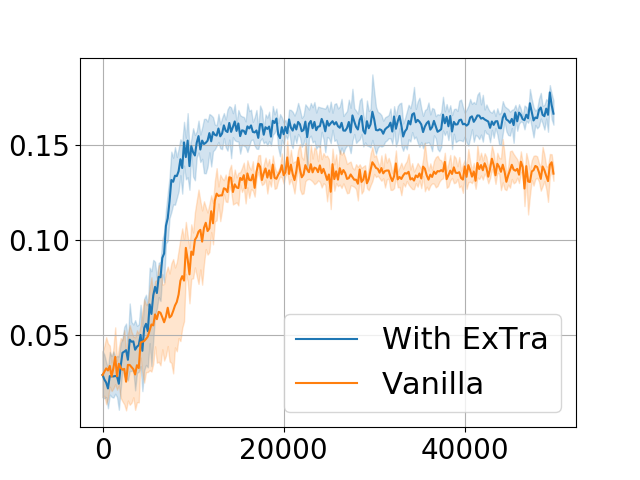}
            \caption{$\epsilon$-greedy}
            \label{fig:4a}
        \end{subfigure}
        \hspace*{\fill}
        \begin{subfigure}{0.22\textwidth}    \includegraphics[width=\linewidth]{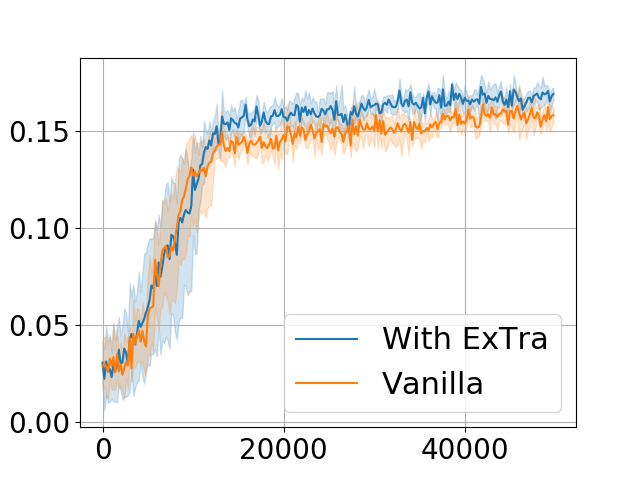}
            \caption{MBIE-EB}
            \label{fig:4b}
        \end{subfigure}
        \hspace*{\fill}
        \begin{subfigure}{0.22\textwidth}    \includegraphics[width=\linewidth]{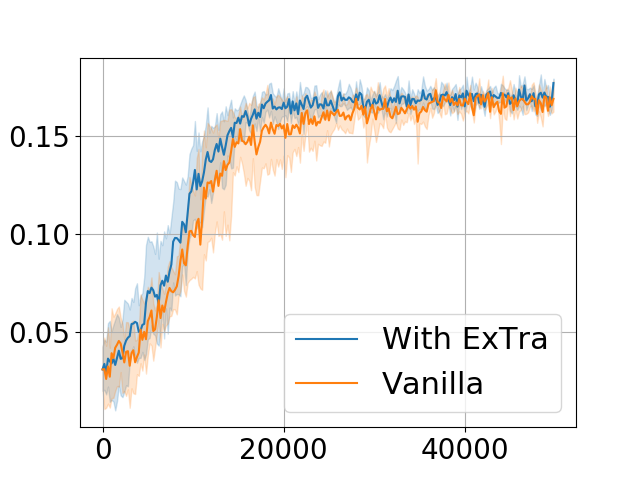}
            \caption{Pursuit}
            \label{fig:4c}
        \end{subfigure}
        \hspace*{\fill}
        \begin{subfigure}{0.22\textwidth}    \includegraphics[width=\linewidth]{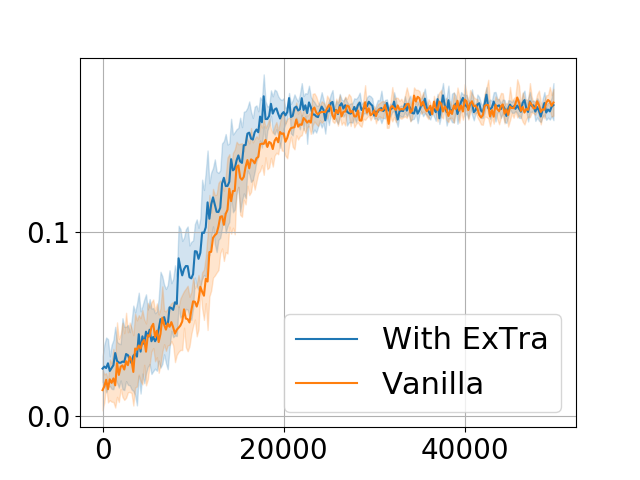}
            \caption{Softmax}
            \label{fig:4d}
        \end{subfigure}
    \caption{Complementary effect of using ExTra in conjunction with traditional exploration methods}
    \label{fig:complimentary}
\end{figure*}
In this section we present an empirical analysis of the performance of ExTra. We aim to address the following questions:
\begin{enumerate}
    \item How does ExTra compare against traditional exploration methods?
    \item How sensitive is ExTra to the choice of source task?
    \item Can ExTra enhance the performance of other exploration algorithms that only use local information?
    \item How does ExTra compare against Bisimulation Transfer?
\end{enumerate}

We use the optimistic definition of the bisimulation metric in all our experiments as Castro et al. \cite{castro2010using} note that it gives superior transfer results than the pessimistic definition.

\subsection{Evaluation metrics}
The goal of this paper is to improve the rate of convergence of RL with the help of external knowledge transferred from a different task. The rate of convergence of an RL algorithm can be judged from a plot of Mean Average Reward (MAR) obtained by the agent over steps of training. Mean Average Reward is defined as follows:

\begin{definition}
\textbf{Mean Average Reward:} The average reward of a trajectory $\tau$ obtained by following a policy $\pi$ is the average of all the rewards received by the agent in the trajectory. Mean Average Reward (MAR) of $\pi$ is the mean of average reward for multiple trajectories rolled out from the policy.
\begin{equation}
    MAR(\pi) = \frac{1}{N} \sum_{i=1}^N \frac{1}{T_i} \sum_{t=1}^{T_i} R(s_t^i, a_t^i) \notag
\end{equation}
Where $T_i$ is the length of the $i^{th}$ trajectory.
\end{definition}

The Area under the MAR Curve (AuC-MAR) is an objective measure of rate of convergence when the highest MAR values achieved asymptotically by all the candidate algorithms are the same \citep{taylor2009transfer}. A higher AuC-MAR implies higher rate of convergence. We use AuC-MAR for comparing the rates of convergence of RL algorithms using ExTra with the baseline methods. For the convenience of comparison, we report AuC-MAR as a percentage of AuC for the optimal policy (whose MAR is represented as a straight horizontal line through the steps of training but not shown in the plots to avoid clutter).\\

For measuring the relative improvement of the rate of convergence achieved by ExTra, we use Transfer Ratio, denoted by $TR$, defined as follows:

\begin{equation}
    TR = \frac{\text{AuC-MAR with ExTra} - \text{AuC-MAR without ExTra}}{\text{AuC-MAR without ExTra}}
\end{equation}




\subsection{Experimental Design, Results and Discussion}
We use stochastic grid-world environments of different levels of complexity (Figures \ref{fig:envs} and \ref{fig:sixtask}) for analysing the viability of ExTra. All of these environments have the common task of avoiding obstacles and reaching a goal. After being initialised with uniform probability from any of the grid-cells, the agent gets a reward of $+1$ on reaching the goal, $-1$ for stepping into a fire-pit and $0$ elsewhere. The agent has four primitive actions: up, down, left and right. When one of the actions is chosen, the agent moves in the desired direction with $0.9$ probability, and with $0.1$ probability it moves uniformly in one of the other three directions or stays in the same place. If the agent bumps into a wall, it does not move and remains in the same state.\\ 

We choose Q-learning with four traditional exploration algorithms viz. $\epsilon$-greedy uniform random exploration, MBIE-EB \cite{strehl2005theoretical}, Pursuit \cite{sutton2018reinforcement} and Softmax \cite{sutton2018reinforcement} as baselines for comparison (see supplementary material for details of these algorithms). For ExTra, we train the optimal policy for the source domain using 
value iteration and calculate the optimistic bisimulation distance $d_\approx(s_1, (s_2, a_2))$, $\forall s_1\in S_1, s_2\in S_2, a_2\in A_2$, tuning $c_R$ and $c_T$ for maximum transfer accuracy. We use the PyEMD library by \cite{Mayner2018pyemd} to calculate earth mover distance for the estimation of $T_K(d)$ \citep{pele2008,pele2009}. Our code is available on GitHub\footnote{https://github.com/madan96/ExTra}. The results reported for each baseline are obtained after rigorous tuning of their respective hyperparameters for the target domain. We report all MAR and AuC-MAR numbers as their mean $\pm$ standard deviation calculated over $10$ different experiments with different random seeds. We tabulate the hyperparameter values in the Appendix section. 


\subsubsection{\textbf{How does ExTra compare against traditional exploration methods?}}
In our first set of experiments, we show that given an optimal policy for a related task-environment, ExTra can obtain faster convergence than traditional exploration methods that only use local information. We choose the \texttt{FourLargeRooms}, \texttt{SixLargeRooms}, and \texttt{NineLargeRooms} environments shown in Figure \ref{fig:envs} as benchmarks. We train Q-learning agents with $\epsilon$-greedy, MBIE-EB, Pursuit and Softmax explorations in these environments as baselines. For ExTra, we choose \texttt{FourSmallRooms} in Figure \ref{fig:envs} as the source environment. We train an $\epsilon$-greedy Q-learning agent ($\epsilon=0.2$) that uses ExTra instead of uniform random for exploration (Algorithm \ref{algo:T-ReX-Q}) on each of the three target environments. Figure \ref{fig:MAR} shows the variation of MAR over steps of training for the baseline as well as our ExTra agents. Table \ref{table:MAR} shows AuC-MAR values. We observe that our ExTra agent consistently achieves faster convergence in all the three environments. This corroborates our claim that ExTra can achieve faster convergence and hence superior sample efficiency if we have access to the optimal policy in a related task-environment. 

\subsubsection{\textbf{How sensitive is ExTra to the choice of source task?}}
\label{sec:sensitivity_to_source_selection}
In this experiment we study the sensitivity of ExTra to the selection of source task. In our \emph{first} study we consider transfer between tasks that share the same state-action space and reward structure but differ in goal positions. We construct $6$ tasks in the \texttt{SixLargeRooms} environment in which the  $i^{th}$ task has its goal at the center of the $i^{th}$ room (see Figure \ref{fig:sixtask}).
\begin{figure}[!h]
    \centering
    \includegraphics[width=0.3\textwidth]{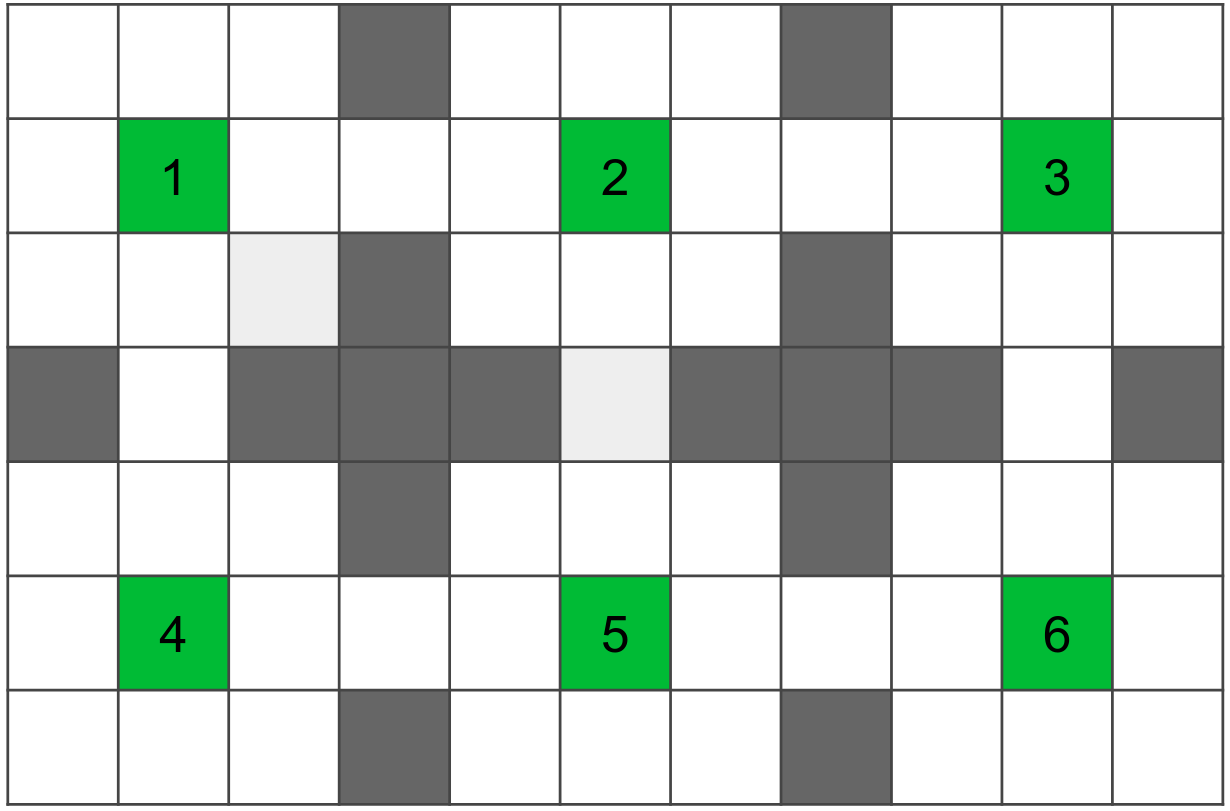}
    \caption{Goal locations in the six tasks used to study the sensitivity of ExTra to the choice of source task.}
    \label{fig:sixtask}
\end{figure} 
We use tasks $1$ through $5$ as source and task $6$ as target. 
In our \emph{second} study, we compare transfer from source tasks that differ in state space (\texttt{FourSmallRooms}), action space (\texttt{3Actions}), reward structure (\texttt{Firepit}, \texttt{NoGoal}, \texttt{NegativeHallways}), goal distribution (\texttt{Two Goals}) and transition dynamics (\texttt{Gravity}). \texttt{3Actions}, \texttt{NoGoal} and \texttt{Gravity} are variants of the \texttt{FourLargeRooms} environment with just three allowed actions (left, right, down), no goal at all and an added probability of $0.1$ to slide downwards, respectively. The rest of the environments are as depicted in Figure \ref{fig:envs}. The target task-environment is \texttt{FourLargeRooms}. We report AuC-MAR values obtained by $\epsilon$-greedy Q-learning with ExTra (Algorithm \ref{algo:ExTra-eps-greedy-Q}) for each of these source tasks in Tables \ref{table:sensitivity_to_source_task} and \ref{table:sensitivity_to_source_task_FOURLARGEROOMS}. \\

\noindent We make the following observations:
\begin{itemize}
    \item In our first study (Table \ref{table:sensitivity_to_source_task}), each of our ExTra agents fetch higher AuC-MAR values than any of the baseline methods thus demonstrating the efficacy of ExTra. Also there is a rough trend of the AuC-MAR values decreasing with increasing distance of goal in the source task. This demonstrates graceful degradation of performance.
    \item In our second study, ExTra beats both $\epsilon-$greedy and MBIE-EB for even the worst case choice of source task - \texttt{Negative Hallways} (compare Table \ref{table:sensitivity_to_source_task_FOURLARGEROOMS} with Row 1 of Table \ref{table:MAR}). This is possible because ExTra is able to leverage knowledge about the transition dynamics of the source MDP (that are identical to the target MDP) even when the reward structures and goal distributions are drastically different. It also reflects the robustness of ExTra to the choice of source task.
    \item We note that in Table \ref{table:sensitivity_to_source_task_FOURLARGEROOMS} the reward structure of the source environment has a more profound effect on the performance of ExTra than transition dynamics or goal distribution. This observation serves as a guide for choosing source environments for bisimulation transfer.
    \item Higher AuC-MAR for \texttt{FourSmallRooms} than \texttt{3Actions} suggests that source MDPs with the same action space as the target are more favourable for ExTra even if the state spaces are different.
\end{itemize}





\begin{table}[]
\caption{Variation of percentage AuC-MAR of $\epsilon$-greedy Q-learning with ExTra exploration for different source task goal positions in \texttt{SixLargeRooms}.}%
\label{table:sensitivity_to_source_task}
\begin{tabular}{c|c|c|}
\cline{3-3}
\multicolumn{2}{c|}{} & AuC-MAR$(\%)$ \\ \hline
\multicolumn{1}{|c|}{\multirow{4}{*}{Baselines}} & $\epsilon$-greedy & $52.92 \pm 3.83$ \\ \cline{2-3} 
\multicolumn{1}{|c|}{} & MBIE-EB & $34.80 \pm 3.98$ \\ \cline{2-3} 
\multicolumn{1}{|c|}{} & Pursuit & $69.05 \pm 2.84$ \\ \cline{2-3} 
\multicolumn{1}{|c|}{} & Softmax & $64.85 \pm 3.29$ \\ \hline
\multicolumn{1}{|c|}{\multirow{5}{*}{Source task \#}} & 1 & $73.70 \pm 2.21$ \\ \cline{2-3} 
\multicolumn{1}{|c|}{} & 2 & $75.73 \pm 2.08$ \\ \cline{2-3} 
\multicolumn{1}{|c|}{} & 3 & $78.54 \pm 2.78$ \\ \cline{2-3} 
\multicolumn{1}{|c|}{} & 4 & $74.59 \pm 2.81$ \\ \cline{2-3} 
\multicolumn{1}{|c|}{} & 5 & $81.94 \pm 1.70$ \\ \hline
\end{tabular}
\end{table}


\begin{table}[]
\caption{Variation of percentage AuC-MAR of $\epsilon$-greedy Q-learning with ExTra exploration for different choices of source task and \texttt{FourLargeRooms} as target.}%
\label{table:sensitivity_to_source_task_FOURLARGEROOMS}
\begin{tabular}{|l|l|c|}
\hline
\multicolumn{1}{|c|}{Source Task} & Difference & AuC-MAR$(\%)$ \\ \hline \hline
TwoGoals & Goal distribution & $77.83 \pm 1.54$ \\ \hline
Firepit & Reward structure & $75.54 \pm 1.59$ \\ \hline
NoGoal & Reward structure & $75.59 \pm 1.81$ \\ \hline
NegativeHallways & \begin{tabular}[c]{@{}l@{}}Reward structure\\ and Goal distribution\end{tabular} & $73.36 \pm 1.62$ \\ \hline
3Actions & Action space &
$77.60 \pm 1.78$ \\ \hline
Gravity & Transition dynamics &
$77.78 \pm 1.56$ \\ \hline
FourSmallRooms & State space &
$82.79 \pm 1.68$ \\ \hline
\end{tabular}
\end{table}




\subsubsection{\textbf{Can ExTra enhance the performance of other exploration algorithms that only use local information?}}
We formulate $\epsilon$-greedy versions of each of our baseline algorithms ($\epsilon=0.5$), where the agent samples actions from $\pi_{ExTra}$ with probability $\epsilon$ $(\epsilon=0.5)$ and follows the main algorithm rest of the time. Please refer to the supplementary material for pseudo-codes of these algorithms. We choose \texttt{SixLargeRooms} as our benchmark environment and \texttt{FourSmallRooms} (Figure \ref{fig:envs}) as source environment for ExTra. Table \ref{table:complimentary} and Figure \ref{fig:complimentary} present the results of these experiments. 
We see that ExTra can provide improved rate of convergence of the traditional methods and hence can indeed be used as a complementary exploration method. 

\begin{table}[!h]
\caption{Comparison of percentage AuC-MAR scores of traditional exploration methods with and without ExTra}
\label{table:complimentary}
\centering
\begin{tabular}{l|c|c|c|}
\cline{2-4}
\multirow{2}{*}{} & \multicolumn{2}{c|}{AuC-MAR$(\%)$ } & \multirow{2}{*}{$TR(\%)$} \\ \cline{2-3}
 & \multicolumn{1}{c|}{vanilla} & \multicolumn{1}{c|}{with ExTra} &  \\ \hline
\multicolumn{1}{|l|}{$\epsilon$-greedy} & $61.91 \pm 1.30$ & $76.10 \pm 1.44$ & $23.41$ \\ \hline
\multicolumn{1}{|l|}{MBIE-EB} & $70.55 \pm 1.30$ & $74.87 \pm 1.68$ & $6.11$ \\ \hline
\multicolumn{1}{|l|}{Pursuit} & $72.80 \pm 2.02$ & $76.87 \pm 1.16$ & $5.59$ \\ \hline
\multicolumn{1}{|l|}{Softmax} & $69.51 \pm 1.60$ & $72.59 \pm 1.14$ & $4.43$ \\ \hline
\end{tabular}
\end{table}

\subsubsection{\textbf{How does ExTra compare against Bisimulation Transfer?}}
To address this question we choose \texttt{FourSmallRooms} as source environment. As target, we choose \texttt{FourLargeRooms} and a modified version of the \texttt{Taxi-v2} environment of OpenAI Gym \cite{brockman2016openai} with $54$ states. The modified \texttt{Taxi-v2} environment has $3$ \textit{rows}, $3$ \textit{columns} and $2$ \textit{drop locations} at the top-left and bottom-right corners. The action space of this environment remains same as the original version, where the agent can choose actions from \texttt{South}, \texttt{North}, \texttt{West}, \texttt{East}, \texttt{Pickup} and \texttt{Drop}. We compare the rates of convergence of $\epsilon-$greedy Q-learning with ExTra and the bisimulation transfer algorithm of Castro et al. \cite{castro2010using} that initializes the Q-matrix with the Q-value of the transferred policy. Figure \ref{fig:bisim_compare} presents the results of the experiments. We see that when the source and target tasks are similar (\texttt{FourSmallRooms} and \texttt{FourLargeRooms}) and bisimulation policy transfer is successful, Q-learning gets an initial jumpstart while ExTra catches up. When the source and target tasks are drastically different (\texttt{FourSmallRooms} and \texttt{Taxi-v2}), the bisimulation distances are large and $\pi_{ExTra}$ tends to a uniform distribution over target actions. As a result, ExTra falls back to vanilla $\epsilon-$greedy Q-learning with uniform sampling. On the other hand, Q-learning initialized with bisimulation transfer has to first recover from the effect of negative transfer using $\epsilon-$greedy uniform exploration before it can start learning. In this case, it does not get any jumpstart and also converges slower than ExTra which does not need to correct for the damage done by negative transfer and proceeds as vanilla $\epsilon-$greedy Q-learning. We observe that, while bisimulation transfer can be both effective and fatal depending on how the source and target tasks are related, ExTra does not negatively affect the learning process even when the source and target task-environments are drastically different.

\begin{figure}[h!]
    \centering
        \begin{subfigure}{0.68\columnwidth}    \includegraphics[width=\linewidth]{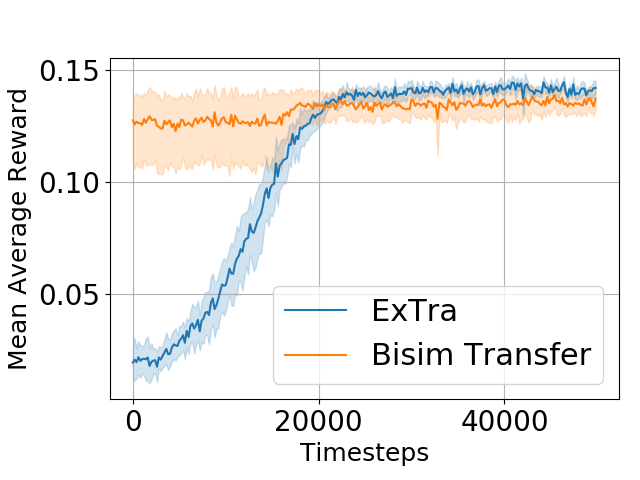}
            \caption{\texttt{FourLargeRooms}}
            \label{fig:1a}
        \end{subfigure}
        \begin{subfigure}{0.70\columnwidth}    \includegraphics[width=\linewidth]{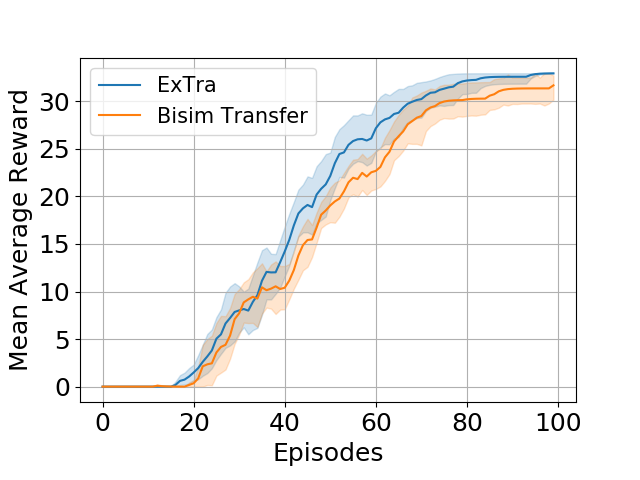}
            \caption{\texttt{Taxi-v2}}
            \label{fig:1b}
        \end{subfigure}
    \caption{Comparison between $\epsilon-$greedy Q-learning with ExTra and the bisimulation transfer algorithm of \cite{castro2010using}.}
    \label{fig:bisim_compare}
\end{figure}

\section{Conclusion}
\label{sec:conclusion}
In this work we investigate the fundamental possibility of using transfer to guide exploration in RL and formulate a novel transfer guided exploration algorithm, ExTra, based on the theory of bisimulation based policy transfer in MDPs. We demonstrate that our method achieves faster convergence compared to traditional exploration methods that only use local information, is robust to source task selection and can complement traditional exploration methods by improving their rate of convergence. We also provide theoretical guarantees in the form of a lower bound on the optimal advantage of an action in the target domain in terms of bisimulation distance from the source environment. In our future work we plan to extend ExTra to larger state-action spaces and continuous control tasks, using differential and sampling-based methods for approximating the lax-bisimulation distance similar to \cite{gelada2019deepmdp}. An interesting direction of research could be to incorporate ExTra within the meta-RL framework for improving the meta-training and meta-testing sample efficiencies by leveraging knowledge from multiple source MDPs.
\section*{Acknowledgements}
We gratefully acknowledge Pablo Samuel Castro of Google for helping us with the optimized implementation of bisimulation distance metrics and clarifying some crucial concepts regarding bisimulation based policy transfer in MDPs. Anirban Santara’s work in this project was supported by Google India under the Google India PhD Fellowship Award.

\bibliographystyle{ACM-Reference-Format}
\bibliography{my_references}  

\clearpage
\appendix
\section*{\Large{Appendix}}
The full code-base is available open-source at \url{https://github.com/madan96/ExTra}

\section{ExTra variants of traditional exploration strategies}
\label{append:trex-variants}
\begin{algorithm}[H]
    \centering
    \caption{ExTra + $\epsilon$-greedy Q-learning with random sampling}\label{ExTra-eps_greedy}
    \begin{algorithmic}[1]
    \State step = $0$
    \While{step < MAXSTEPS}
        \State \textbf{with probability} $\epsilon$
            \State \qquad \textbf{with probability} $\epsilon_{bisim}$
                \State \qquad \quad  $b \sim \pi_{ExTra}(\cdot | t, \mathcal{M}_1, \pi_1^*)$
            \State \qquad \textbf{with probability} $1-\epsilon_{bisim}$
                \State \qquad \quad $b \sim uniform(A_2)$
        \State \textbf{with probability} $1-\epsilon$
            \State \qquad $b \leftarrow arg\max_{b'} Q_2(t, b')$
        \State $r$ = $take\_step$($b$)
        \State $update\_Q(Q_2(t, b), r)$
        \State step = step + $1$
    \EndWhile
    \end{algorithmic}
    \label{algo:ExTra-epsgreedy}
\end{algorithm}

\begin{algorithm}[H]
    \centering
    \caption{ExTra + Softmax}\label{ExTra-Softmax}
    \begin{algorithmic}[1]
    \State step = $0$
    \While{step < MAXSTEPS}
        \State \textbf{with probability} $\epsilon$
            \State \qquad $b \sim \pi_{ExTra}(\cdot | t, \mathcal{M}_1, \pi_1^*)$
        \State \textbf{with probability} $1-\epsilon$
            \State \qquad $b \sim softmax(Q)$  
        \State $r$ = $take\_step$($b$)
        \State $update\_Q(Q_2(t, b), r)$
        \State step = step + $1$
    \EndWhile
    \end{algorithmic}
    \label{algo:ExTra-Softmax}
\end{algorithm}
\begin{algorithm}[H]
    \centering
    \caption{ExTra + Pursuit}\label{ExTra-Pursuit}
    \begin{algorithmic}[1]
    \State step = $0$
    \State $\pi_{pursuit} = Uniform(A_2)$ 
    \While{step < MAXSTEPS}
        \State \textbf{with probability} $\epsilon$
            \State \qquad $b \sim \pi_{ExTra}(\cdot | t, \mathcal{M}_1, \pi_1^*)$
        \State \textbf{with probability} $1-\epsilon$
            \State \qquad $a \leftarrow arg\max_{b'} Q_2(t, b')$
            \State \qquad $update\_\pi_{pursuit}(a)$
            \State \qquad $b \leftarrow Sample(\pi_{pursuit})$
        \State $r$ = $take\_step$($b$)
        \State $update\_Q(Q_2(t, b), r)$
        \State step = step + $1$
    \EndWhile
    \end{algorithmic}
    \label{algo:ExTra-Pursuit}
\end{algorithm}
\begin{algorithm}[H]
    \centering
    \caption{ExTra + MBIE-EB}\label{ExTra-MBIE_EB}
    \begin{algorithmic}[1]
    \State step = $0$
    \State $\pi_{pursuit} = Uniform(A_2)$ 
    \While{step < MAXSTEPS}
        \State \textbf{with probability} $\epsilon$
            \State \qquad $b \sim \pi_{ExTra}(\cdot | t, \mathcal{M}_1, \pi_1^*)$
        \State \textbf{with probability} $1-\epsilon$
            \State \qquad $b \leftarrow arg\max_{b'} Q_2(t, b')$
        \State $r$ = $take\_step$($b$) + $\frac{\beta}{\sqrt{n(s, a}}$
        \State $update\_Q(Q_2(t, b), r)$
        \State step = step + $1$
    \EndWhile
    \end{algorithmic}
    \label{algo:ExTra-MBIEEB}
\end{algorithm}

\newpage
\section{Hyperparameters for baseline exploration strategies}
\label{append:baseline-hyper}
\begin{table}[!h]
\begin{tabular}{|l|l|l|}
\hline
\multicolumn{1}{|c|}{\multirow{2}{*}{$\epsilon$-greedy}} & Q Learning Rate & 0.2 \\ \cline{2-3} 
\multicolumn{1}{|c|}{} & $\epsilon$ & 0.5 \\ \hline
\multirow{2}{*}{Softmax} & Q Learning Rate & 0.2 \\ \cline{2-3} 
 & $\tau$ & 8.1 \\ \hline
\multirow{3}{*}{MBIE-EB} & Q Learning Rate & 0.2 \\ \cline{2-3} 
 & cb-$\beta$ & 0.005 \\ \cline{2-3} 
 & $\epsilon$ & 0.2 \\ \hline
\multirow{2}{*}{Pursuit} & Q Learning Rate & 0.2 \\ \cline{2-3} 
 & $\beta$ & 0.007 \\ \hline
\multirow{3}{*}{ExTra} & Q Learning Rate & 0.5 \\ \cline{2-3} 
 & $\epsilon$ & 0.2 \\ \cline{2-3} 
 & $\alpha$ & 1e-6 \\ \hline
\end{tabular}
\end{table}

\section{Hyperparameters for Optimistic Bisimulation Transfer}
\label{append:bisim-hyper}
\begin{table}[!h]
\begin{tabular}{|p{0.5in}|p{0.5in}|p{0.5in}|p{0.5in}|}
\hline
Transfer Hyperparameters & Four Large Rooms & Six Large Rooms & Nine Large Rooms \\ \hline
$c_R$ & 0.1 & 0.2 & 0.1 \\ \hline
$c_T$ & 0.9 & 0.9 & 0.9 \\ \hline
Threshold & 0.01 & 0.01 & 0.01 \\ \hline
Least fixed point iterations & 5 & 5 & 5 \\ \hline
\end{tabular}
\end{table}
\end{document}